\def\BibTeX{{\rm B\kern-.05em{\sc i\kern-.025em b}\kern-.08em
    T\kern-.1667em\lower.7ex\hbox{E}\kern-.125emX}}
\def\<{\langle}
\def\>{\rangle}
\newcommand{\normal}{\mathcal{N}}
\newcommand{\beq}{\begin{equation}}
\newcommand{\eeq}{\end{equation}}
\newcommand{\sign}{\textrm{\sign}}
\newcommand{\E}{\mathbb{E}}
\newcommand{\mb}{\mathbb}
\newcommand{\mc}{\mathcal}
\newtheorem{thm}{Theorem}[section] 
\newtheorem{lem}[thm]{Lemma} 
\newenvironment{proof}{\textit{Proof:}}{\hfill$\square$}
\begin{document}

\title{Flexible and Efficient \\ Drift Detection without Labels
}


\author{
\IEEEauthorblockN{Nelvin Tan}
\IEEEauthorblockA{
\textit{American Express}\\
thongcainelvin.tan@aexp.com}
\and
\IEEEauthorblockN{Yu-Ching Shih}
\IEEEauthorblockA{
\textit{American Express}\\
yuching.shih1@aexp.com}
\and
\IEEEauthorblockN{Dong Yang}
\IEEEauthorblockA{
\textit{American Express}\\
dong.yang@aexp.com}
\and
\IEEEauthorblockN{Amol Salunkhe}
\IEEEauthorblockA{
\textit{American Express}\\
amol.salunkhe@aexp.com}
}

\maketitle

\begin{abstract}
    Machine learning models are being increasingly used to automate decisions in almost every domain, and ensuring the performance of these models is crucial for ensuring high quality machine learning enabled services. Ensuring concept drift is detected early is thus of the highest importance. A lot of research on concept drift has focused on the supervised case that assumes the true labels of supervised tasks are available immediately after making predictions. Controlling for false positives while monitoring the performance of predictive models used to make inference from extremely large datasets periodically, where the true labels are not instantly available, becomes extremely challenging. We propose a flexible and efficient concept drift detection algorithm that uses classical statistical process control in a label-less setting to accurately detect concept drifts. We shown empirically that under computational constraints, our approach has better statistical power than previous known methods. Furthermore, we introduce a new semi-supervised drift detection framework to model the scenario of detecting drift (without labels) given prior detections, and show our how our drift detection algorithm can be incorporated effectively into this framework. We demonstrate promising performance via numerical simulations.
\end{abstract}

\begin{IEEEkeywords}
Concept drift, unsupervised learning, machine learning, time complexity, classification, big data
\end{IEEEkeywords}

\section{Introduction}

Machine Learning has been adopted as a popular technique for providing data-driven prediction capabilities, and is at the core of several otherwise complicated or intractable tasks. The ability to generalize and extrapolate from data has made its usage attractive as a general approach to data driven problem solving. However, the generalizability of models relies on an important assumption of data \textit{stationarity}, which states that the data points in the data stream should be identically and independently distributed (i.i.d.), i.e., datapoints are generated from the same distribution \cite{liobait2010}. While this is a reasonable assumption, in practice it is often violated. The data that models work with in production might become significantly different than the static dataset that the model was trained on -- a phenomenon named \textit{concept drift} \cite{gama2014,wang2015}.

We consider the \textit{abrupt} drift setting, where a new concept occurs in a short period of time (e.g. at the beginning of COVID-19 in March 2020, stock prices suddenly changed). Following the parlace of \cite{Gemaque2020}, we focus on non-online drift detection setting with whole-window detection, where the entire window is used in the detection. Most importantly, we consider the label-less (i.e., unsupervised) setting where labels of the new data (i.e., during production) are unavailable.

\subsection{Problem Setup} \label{sec:setup}

Formally, a concept drift is the problem where there exists a time $t$ such that
\begin{align}
    P_t(\boldsymbol{x},y)\neq P_{t+1}(\boldsymbol{x},y),
\end{align}
where $P_t$ denotes the joint probability distribution of the features $\boldsymbol{x}\in\mb{R}^m$ and target $y\in\mb{R}$. By the chain rule of probability, we have
\begin{align}
    P_t(\boldsymbol{x},y)
    =P_t(\boldsymbol{x})\cdot P_t(y|\boldsymbol{x}),
\end{align}
which implies the following types of drift \cite{Lu2028}:
\begin{itemize}
    \item \textbf{Virtual drift.} We have $P_t(\boldsymbol{x})\neq P_{t+1}(\boldsymbol{x})$ while $P_t(y|\boldsymbol{x})=P_{t+1}(y|\boldsymbol{x})$.
    \item \textbf{Real drift.} We have $P_t(y|\boldsymbol{x})\neq P_{t+1}(y|\boldsymbol{x})$ while $P_t(\boldsymbol{x})=P_{t+1}(\boldsymbol{x})$.
    \item \textbf{Mixture drift.} Mixture of virtual drift and actual drift, namely, $P_t(y|\boldsymbol{x})\neq P_{t+1}(y|\boldsymbol{x})$ and $P_t(\boldsymbol{x})\neq P_{t+1}(\boldsymbol{x})$.
\end{itemize}
Without access to the labels $y$, we can only detect virtual drift and mixture drift. Note that virtual drift may still cause problems for the inference model, since changes in $P_t(y|\boldsymbol{x})$ may change the error of the inference model, even if $P_t(y|\boldsymbol{x})=P_{t+1}(y|\boldsymbol{x})$ \cite{Hoens2012}. In this paper, we mainly look at the following problem setup:

\textbf{Drift detection without labels (problem 1).} Given a datastream, we have two windows, the \textit{reference} window/set $\mc{S}_R$ and the \textit{detection} window/set $\mc{S}_D$. Furthermore, we have access to the \textit{training} set $\mc{S}_T$ that was used to train the inference model and is of the same distribution as $\mc{S}_R$. The datapoints in the reference window are labeled whereas the datapoints in the detection window are unlabeled and may (or may not) have suffered a concept drift. Let us denote the distribution of $\mc{S}_T$ and $\mc{S}_R$ as $P_R$ and the distribution of $\mc{S}_D$ as $P_D$. The goal is to detect whether $P_R(\boldsymbol{x})=P_D(\boldsymbol{x})$.

\textbf{Drift detection with prior detections (problem 2).} We also introduce a new distinct semi-supervised setting for drift detection: In this setting, we assume that prior detections have already taken place. Specifically, we have access to prior training, reference, and detection data that has been used for prior drift detection. Furthermore, we also have access to the result of the drift detection. Mathematically, we say that given
\begin{align}
    \big\{\mc{S}_T^{(i)},\mc{S}_R^{(i)},\mc{S}_D^{(i)},z^{(i)}\big\}_{i=1}^M\quad
    \text{where $i\in\{1,\dots,M\}$},
    \label{eq:prior knowledge}
\end{align}
where there are $M$ prior drift detections and we used $\mc{S}_T^{(i)},\mc{S}_R^{(i)},\mc{S}_D^{(i)}$ for the $i$th drift detection to identify the outcome $z^{(i)}\in\{0,1\}$, where $1$ signifies that a drift has taken place and $0$ signifies otherwise. With prior knowledge \eqref{eq:prior knowledge}, the goal is to detect a drift given a new triplet $\{\mc{S}_T,\mc{S}_R,S_D\}$ (as explained in the point above).

\textbf{Remark.} The motivation behind setup 2 is the scenario where a company has done many drift detections in the past using labeled data, which leads to high accuracy (which motivates the assumption of knowing $z^{(i)}$) of detection but is unsustainable. Therefore, at some point of time, they decide to switch to drift detection with unlablled data. To the best of our knowledge, this framework has been applied to several scenarios \cite{Lopez2017} but not in concept drift detection.

\textbf{Big data regime.} We consider the big data regime where there is a huge volume of data (e.g., finance). In this regime, the runtime of a drift detection approach becomes very important. We seek an approach that allows us to control the false positive rate, but at the sametime is statistically powerful and runtime efficient.

\textbf{Notation.} Throughout the paper, the function $\log(\cdot)$ has base $e$, and we make use of Bachmann-Landau asymptotic notation (i.e., $O$). $\boldsymbol{0}_n$ denotes a vector of zeros of length $n$, $\boldsymbol{1}_n$ denotes a vector of ones of length $n$, and $\boldsymbol{I}_n$ denotes the $n\times n$ identity matrix.

\subsection{Related Work}

\paragraph{Detection with labels} Popular methods for detecting concept drift rely on updating models as new data becomes available and when changes in data are detected. These methods assume the availability of labeled data \cite{gama2004,bifet2007,baena2006,goncalves2014,gama2014} -- we refer the interested reader to this survey \cite{Lu2028}. 

\paragraph{Detection without labels} While the use of labeled data for retraining and updating models is largely unavoidable, its use for the purpose of drift detection might be impractical as labeling is time consuming, expensive and in some cases, not a possibility at all \cite{lughofer2016,krempl2014}. Moreover the need for constant validation of the learned model leads to wasted labels, which are discarded when the model is found to be stable \cite{sethi2015}. This has motivated the development of {\em unlabeled} or {\em label-less} drift detection techniques which monitor changes to the feature distribution, as an indicator of drift. To stick to the theme of non-online and whole-window drift detection without labels, we will review related work in this area below -- we refer readers who are interested in other variants of the unsupervised drift detection to this survey \cite{Gemaque2020}. 

The most relevant work is that of \cite{Goldenberg2019}, which surveyed and analyzed the suitability of various distance measures for quantifying concept drift. Their key findings are:
\begin{itemize}
    \item Wasserstein distance and maximum mean discrepancy (MMD) scale well to high feature dimensions, but are impractical for large volumes of data (i.e. large sample sizes) due to high runtime complexity that depends on the size of the data.
    \item Hellinger distance, Kullback–Leibler divergence, Kolmogorov–Smirnov distance, and total variation distance can be numerically approximated for univariate data of any size. However, they do not scale up well to higher feature dimensions. Kolmogorov–Smirnov distance cannot be extended beyond bivariate case, and the rest (Hellinger distance, Kullback–Leibler divergence, and total variation distance) are limited by the exponential complexity of frequency/density calculation. Hellinger distance and Kullback–Leibler divergence have a closed-form solution for the multivariate Gaussian, but not for general multivariate distributions.
\end{itemize}
Overall, they recommended the Hellinger distance for drift detection in univariate or low-dimensional data. With the goal of improving efficiency for MMD, \cite[Section 6]{Gretton2012} calculated MMD over pair of datapoints (instead of taking MMD between the two sets), then obtained their test statistic by taking the averaging the MMD -- this has asymptotic Gaussian null distribution via the central liimt theorem. While this is very efficient, the test statistic has very high variance. To lower the variance, while still while retaining an asymptotically Gaussian null distribution, \cite{Zaremba2013} introduce the idea of batching, where the algorithm calculates the MMD over batches of datapoints, and the obtain the test statistic by averaging the MMD. \cite{Domingo2023} followed this up by showing that specifically for the MMD with permutation testing and sub-exponential data distributions, there exists a sample compression technique that is able to improve efficiency while also maintaining power. Our batching approach works for high-dimensional data and any distance measure (not just for MMD), makes no distribution assumption, and is runtime efficient. Furthermore, our method is fully unsupervised, model independent, task independent (allows for classification or regression).

Other related works on unsupervised drift detection include: \cite{Bashir2017} studied unlabeled drift detection for classification-type inference models, \cite{Maletzke2018} studied drift detection with target class imbalance, \cite{sethi2015} developed a drift detection that works with the SVM classifier and further improved it in \cite{sethi2017} to work with an ensemble of classifiers, \cite{Liu2018} focused on detecting drifts caused regional density changes, \cite{Li2019} studied unlabeled sequence anomaly in multi-dimensional sequences, and \cite{zheng2019,lundberg2017} studied the use of Shapley values, which are obtained by processing the features, to detect and also explain drift detection.

\subsection{Main Contributions}

Our main contributions are as follows:
\begin{itemize}
    \item We introduce a flexible and efficient framework (Algorithm \ref{alg:BD}) that uses the idea of batching for drift detection without labels -- the flexible choice being the option to choose the statistical distance between batches of data. We rigorously analyze the runtime complexity and show that it is more efficient compared to its non-batching counter parts, i.e., doing permutation testing on the distances obtained from the entire window/set. Numerical simulations are conducted to shown that batching is statistically more powerful than not batching, given a computational/runtime constraint.
    \item We introduce a distinct semi-supervised framework for the case where the analyst has access to information about prior detections (see Section \ref{sec:setup}). To this end, we provide an algorithm (Algorithm \ref{alg:classifier}) for this problem, that builds upon our batching approach, and analyze its performance via simulations.
\end{itemize}

\section{Algorithm for Problem 1}

Our framework and algorithm is presented in Algorithm \ref{alg:BD}, and a visual representation is presented in Figure \ref{fig:BD_algo}. Some remarks are due:
\begin{itemize}
    \item \textbf{Non parametric and parallelizable.} Our algorithm does not make any distribution assumption on the data provided. Our algorithm can be parallelized by computing the distances between batches in parallel.
    \item \textbf{Distance metric.} Our algorithm is flexible in the sense that we can choose the distance metric to use -- some examples are the Wasserstein-1 distance (also known as earth mover's distance (EMD)), maximum mean discrepancy (MMD), and Kullback–Leibler (KL) divergence.
    \item \textbf{Paired test.} We use the paired t-test instead of the 2-sample t-test because the two samples $\{d_{TR_1},\dots,d_{TR_n}\}$ and $\{d_{TD_1},\dots,d_{TD_n}\}$ are dependent of each other, through the involvement of the training set $\mathcal{S}_T$ in their calculation.
    \item \textbf{Use of training set.} One might point out that we could just compute batched distances between the (i) union of the training and reference sets, and (ii) the detection set, and then follow up with a one-sample t-test on the batched distances. This approach however is not as general as ours since certain distances, when computed, are always positive even for samples drawn from the same distribution (e.g., the computation of EMD \cite{Peyre2019}).
\end{itemize}

\begin{algorithm}
    \caption{Batched distance (BD) algorithm} \label{alg:BD}
    \begin{algorithmic}
        \STATE \textbf{input:} training set $\mathcal{S}_T$, reference set $\mathcal{S}_R$, detection set $\mathcal{S}_D$, significant level $\alpha$
        \STATE Partition training set into $n$ batches of size $k$, denoted as $\{\mathcal{S}_{T_1},\dots,\mathcal{S}_{T_n}\}$.
        \STATE Partition reference set into $n$ batches of size $k$, denoted as $\{\mathcal{S}_{R_1},\dots,\mathcal{S}_{R_n}\}$.
        \STATE Partition detection set into $n$ batches of size $k$, denoted as $\{\mathcal{S}_{D_1},\dots,\mathcal{S}_{D_n}\}$.
        \STATE Pair the training set batches with the reference window batches and the detection window batches in the following manner: $(\mathcal{S}_{T_1},\mathcal{S}_{R_1}),\dots,(\mathcal{S}_{T_n},\mathcal{S}_{R_n})$ and $(\mathcal{S}_{T_1},\mathcal{S}_{D_1}),\dots,(\mathcal{S}_{T_n},\mathcal{S}_{D_n})$.
        \FOR{each pair}
            \STATE Compute a distance metric between the samples, denoted as $d_{TR_1}=\text{dist}(\mathcal{S}_{T_1},\mathcal{S}_{R_1}),\dots,d_{TR_n}=\text{dist}(\mathcal{S}_{T_n},\mathcal{S}_{R_n})$ and $d_{TD_1}=\text{dist}(\mathcal{S}_{T_1},\mathcal{S}_{D_1}),\dots,d_{TD_n}=\text{dist}(\mathcal{S}_{T_n},\mathcal{S}_{D_n})$
        \ENDFOR
        \STATE Conduct a paired two-tailed hypothesis t-test: Denote $\mu$ as the true mean of $d_{TR_1}-d_{TD_1},\dots,d_{TR_n}-d_{TD_n}$. Then we have: $H_0:\mu=0,\,H_1:\mu\neq0$.
        \STATE Drift is detected if the p-value of the t-test is below $\alpha$.
    \end{algorithmic}
\end{algorithm}

\begin{figure}[t]
    \centering
    \includegraphics[width=1.0\columnwidth]{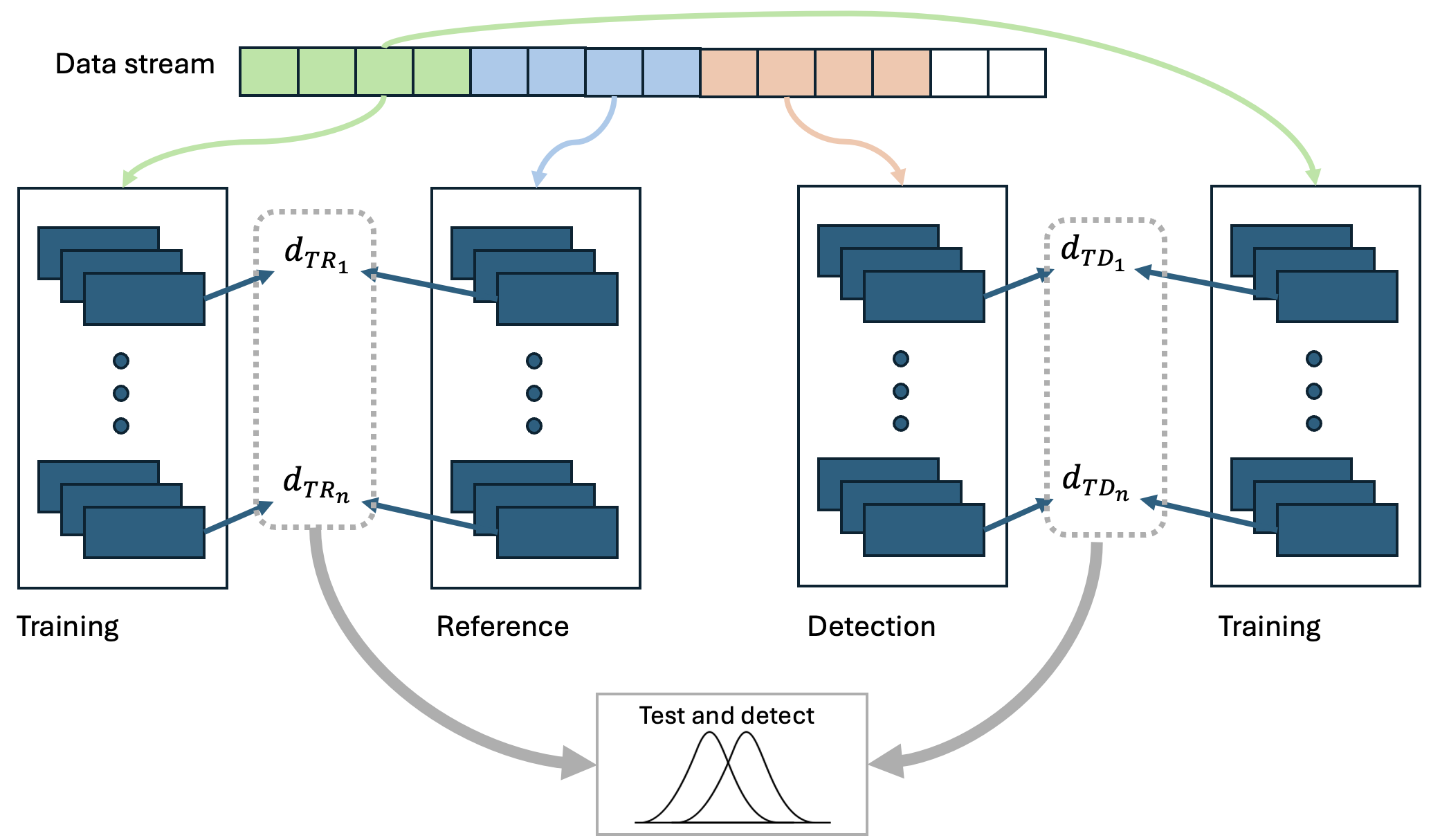}
    \caption{Visualization of Algorithm \ref{alg:BD}.}
    \label{fig:BD_algo}
\end{figure}

\paragraph{Assumptions} Our algorithm assumptions follow from those of the one-sample t-test.
\begin{itemize}
    \item \textbf{Independence.} The differences in distances $d_{TR_1}-d_{TD_1},\dots,d_{TR_n}-d_{TD_n}$ have to be independent of each other. This requires the datapoints in the batched triplets $(\mathcal{S}_{T_1},\mathcal{S}_{R_1},\mathcal{S}_{D_1}),\dots,(\mathcal{S}_{T_n},\mathcal{S}_{R_n},\mathcal{S}_{D_n})$ to be independent of each other.
    \item \textbf{Normality.} The difference of distance metrics $\big\{d_{TR_i}-d_{TD_i}\big\}_{i=1}^n$ have to follow the Gaussian distribution. Due to the robustness of the t-test, this does not matter too much when the number of batches $n$ is large since the mean of differences of distances approaches the Gaussian distribution via the central limit theorem.
    \item \textbf{Equally sized windows.} Requiring $n$ batches of size $k$ from $\mathcal{S}_T$, $\mathcal{S}_R$, and $\mathcal{S}_D$ imposes a constraint that all sets need to be of equal sizes. This is not particularly an issue since we can always generalize the algorithm to account for unequal sizes, this can be done by keeping the number of batches as $n$ and dropping the requirement for the batch size $k$. This results in paired batches of unequal sizes -- if the distance measures can still be calculated, then our algorithm would still work.
\end{itemize}

\paragraph{Runtime complexity} Given two samples of size $N$ and feature dimension $m$, let us define the runtime complexity of computing the distance metric as $d(N,m)$. By observing the $n$ iterations of distance computation in Algorithm \ref{alg:BD}, we have the following runtime result.

\begin{lem} \label{lem:BD_runtime}
The batched distance algorithm in Algorithm \ref{alg:BD} has a runtime complexity of $O\big(n\cdot d(k,m)\big)$, whereas a permuation tests with $B$ permutations (and without batching) gives a runtime complexity of $O\big(B\cdot d(nk,m)\big)$.
\end{lem}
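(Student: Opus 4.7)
The plan is to analyze the two runtime complexities separately, in each case isolating the dominant cost — the distance computations — and showing that the surrounding bookkeeping (partitioning, pairing, the t-test, random shuffling) is asymptotically absorbed. The whole argument is really a counting exercise, so the main subtlety will be making explicit the mild regularity assumption on $d(\cdot,\cdot)$ that lets us sweep the auxiliary costs into the distance-computation term.

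For Algorithm \ref{alg:BD}, I would proceed as follows. First, partitioning each of $\mathcal{S}_T$, $\mathcal{S}_R$, and $\mathcal{S}_D$ into $n$ batches of size $k$ costs $O(nk)$ (we only need to index into arrays). Then the main loop performs exactly $2n$ distance evaluations — namely, $d_{TR_i}$ and $d_{TD_i}$ for $i=1,\dots,n$ — each on a pair of size-$k$ samples of dimension $m$, and therefore costs $2n\cdot d(k,m) = O(n\cdot d(k,m))$. Finally, the paired two-tailed t-test on the $n$ differences $d_{TR_i}-d_{TD_i}$ requires only $O(n)$ arithmetic operations (to compute the sample mean, sample variance, and look up the p-value). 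Under the natural assumption that $d(k,m) = \Omega(km)$ — any sensible distance must at least read its inputs — the partitioning and t-test costs are dominated, and the total is $O(n\cdot d(k,m))$.

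For the permutation-based alternative (without batching), the standard scheme pools the reference and detection sets of total size $2nk$, and in each of the $B$ permutations draws a random partition into two subsamples of size $nk$ and recomputes the distance. Generating each permutation is $O(nk)$ (Fisher--Yates), which is again absorbed into the distance cost under the same regularity assumption on $d$. Since each permutation evaluates the distance on two size-$nk$ samples in dimension $m$, the total runtime is $O(B\cdot d(nk,m))$, as claimed.

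The only real obstacle — and it is minor — is stating the implicit regularity condition $d(N,m)=\Omega(Nm)$ cleanly enough that the bookkeeping steps provably vanish. This holds for all standard choices mentioned in the paper (Wasserstein-1, MMD, KL), so it is a safe and essentially cost-free assumption to place on $d(\cdot,\cdot)$ for the statement.
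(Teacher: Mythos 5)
Your proposal is correct and follows essentially the same route as the paper, which justifies the lemma with a one-line observation that Algorithm \ref{alg:BD} performs $n$ iterations of distance computation on size-$k$ batches while permutation testing performs $B$ distance computations on the full size-$nk$ sets; you simply make the counting explicit and add the (reasonable) regularity condition $d(N,m)=\Omega(Nm)$ to absorb the bookkeeping costs. The only trivial discrepancy is that the paper's permutation test pools all three sets ($3nk$ points, split $2nk$ versus $nk$) rather than two, which does not affect the stated $O\big(B\cdot d(nk,m)\big)$ bound.
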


Essentially, as long as $d(N,m)$ is linear in $m$ and polynomial in $N$, then batching will always have a lower runtime complexity compared to permutation testing without batching. We illustrate through some distance metric choices how batching reduces the runtime complexity of drift detection. 

\textbf{Choice of distances and motivation.} We have selected 3 distance metrics: (i) Earth mover's distance, (ii) Maximum mean discrepancy, and (iii) Kullback-Leibler divergence. We have included the earth mover's distance and maximum mean discrepancy because they are known to work better for higher dimensions \cite{Goldenberg2019}. Note that our list of distance measures is non-exhaustive -- see \cite{Goldenberg2019} for a survey of distance measures that can be used for drift detection.

\textbf{Earth mover's distance.} More generally, the integral probability metric between two distributions $P$ and $Q$ is defined as
\begin{align}
    d_{\mathcal{F}}(P,Q)
    =\sup_{f\in\mathcal{F}}\Big|\E_{X\sim P}[f(X)]-E_{Y\sim Q}[f(Y)]\Big|,
    \label{eq:IPM}
\end{align}
where $\mathcal{F}$ is a class of functions to be chosen and $f:\mb{R}^m\rightarrow\mb{R}$. Choosing $\mathcal{F}$ to be the class of 1-Lipschitz functions, i.e.,
\begin{align*}
    \mathcal{F}=\{f:f\text{ continuous},\,|f(\boldsymbol{x})-f(\boldsymbol{y})|\leq\|\boldsymbol{x}-\boldsymbol{y}\|\},
\end{align*}
where $\|\cdot\|$ is a norm, gives us the EMD. The EMD can be estimated but viewing it as an optimal transport problem and the solving it with linear programming \cite{Peyre2019}.

\begin{lem} \label{lem:EMD}
    Using the batched distance algorithm in Algorithm \ref{alg:BD}, the runtime complexity of using the earth mover's distance (EMD) drops from $O(B(n^3k^3 + mn^2k^2))$ to $O(n(k^3+mk^2))$, where $B$ is the number of iterations in permutation testing.
\end{lem}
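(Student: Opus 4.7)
The plan is to reduce the lemma to a direct application of Lemma \ref{lem:BD_runtime} once the runtime $d(N,m)$ of computing EMD between two samples of size $N$ in dimension $m$ has been identified. So the work breaks into two clean pieces: (i) pin down $d(N,m)$ for EMD, and (ii) plug it in.

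For step (i), I would recall that the empirical EMD between two size-$N$ samples is the optimal value of a discrete optimal transport problem \cite{Peyre2019}, which can be solved by linear programming (e.g., the network simplex / transportation simplex). Building this LP has two costs. First, one has to form the $N \times N$ ground-cost matrix, whose $(i,j)$ entry is the norm $\|\boldsymbol{x}_i - \boldsymbol{y}_j\|$ in $\mathbb{R}^m$; each entry costs $O(m)$, for a total of $O(mN^2)$. Second, the standard LP solver for the resulting transportation problem runs in $O(N^3)$ time (up to log factors that I will absorb into the big-$O$). Adding these, I get $d(N,m) = O(N^3 + mN^2)$.

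For step (ii), I substitute into the two runtime formulas from Lemma \ref{lem:BD_runtime}. The batched distance algorithm performs $n$ independent EMD computations between paired batches of size $k$, giving
\begin{equation}
O\bigl(n \cdot d(k,m)\bigr) = O\bigl(n(k^3 + m k^2)\bigr),
\end{equation}
as claimed. Permutation testing without batching instead calls the EMD routine $B$ times on samples of size $nk$, giving
\begin{equation}
O\bigl(B \cdot d(nk,m)\bigr) = O\bigl(B(n^3 k^3 + m n^2 k^2)\bigr),
\end{equation}
which matches the non-batched rate in the statement.

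The only real obstacle is pinning down the right bound for $d(N,m)$ in a way that matches the cubic-plus-quadratic form appearing in the statement; everything else is arithmetic. I would therefore be careful to cite a specific LP-based transport solver whose complexity is $O(N^3)$ (treating the bit-complexity/log factors as hidden constants, consistent with the way the rest of the paper uses asymptotic notation), and to separate out the cost-matrix assembly step so that the dependence on the feature dimension $m$ appears explicitly as the $mN^2$ term rather than being hidden inside the LP cost. Once those two components are laid out, the final bounds follow immediately from Lemma \ref{lem:BD_runtime}.
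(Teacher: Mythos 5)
Your proposal is correct and follows essentially the same route as the paper's own proof: identify $d(N,m)=O(N^3+mN^2)$ for EMD via the LP/optimal-transport formulation of \cite{Peyre2019}, then substitute into the two runtime expressions of Lemma \ref{lem:BD_runtime} with batch size $k$ for the batched case and $N=nk$ for the permutation-testing case. If anything, your version is slightly more careful than the paper's, which justifies the $O(N^3+mN^2)$ bound only by citation and contains an apparent typo in the final substitution (writing $d(k,m)=O(k^3\log k)$ where $O(k^3+mk^2)$ is clearly intended).
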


\begin{proof}
    It is known that for 2 samples of size $N$, EMD can be solved in $d(N,m)=O(N^3 + mN^2)$ time \cite{Peyre2019}.
    \begin{itemize}
    \item Without batching, computing the p-value for EMD via permutation testing would require permutation which requires $B$ additional iterations for each distance computed. As a result, computing the p-value for EMD via permutation testing would take $O(B(N^3 + mN^2))=O(B(n^3k^3 + mn^2k^2))$, where we set $N=nk$.
    \item For the BD algorithm, setting $d(k,m)=O(k^3\log k)$ in Lemma \ref{lem:BD_runtime} gives us the runtime complexity.
    \end{itemize} 
\end{proof}

\textbf{Maximum mean discrepancy.} Choosing $\mathcal{F}$ in \eqref{eq:IPM} to be the unit ball in a reproducing kernel Hilbert space gives us the MMD. We choose our kernel function to be the radial basis function kernel
\begin{align*}
    k(\boldsymbol{x},\boldsymbol{y})=\exp\Big(-\frac{\|\boldsymbol{x}-\boldsymbol{y}\|_2^2}{2\sigma^2}\Big).
\end{align*}
Given a sample size of $n_*$, the MMD can be estimated by the U-statistic \cite{Gretton2012}:
\begin{align}
    &\frac{1}{n_*(n_*-1)}\sum_{i=1}^n\sum_{j\neq i}k(\boldsymbol{x}_i,\boldsymbol{x}_j) \nonumber \\
    &\quad\quad+\frac{1}{n_*(n_*-1)}\sum_{i=1}^n\sum_{j\neq i}k(\boldsymbol{y}_i,\boldsymbol{y}_j) \nonumber \\
    &\quad\quad-\frac{2}{n_*^2}\sum_{i=1}^n\sum_{j=1}^nk(\boldsymbol{x}_i,\boldsymbol{y}_i).
    \label{eq:MMD_est}
\end{align}

\begin{lem} \label{lem:MMD}
    Using the batched distance algorithm in Algorithm \ref{alg:BD}, the runtime complexity of using the maximum mean discrepancy (MMD) drops from $O(Bmn^2k^2)$ to $O(nmk^2)$, where $B$ is the number of iterations in permutation testing.
\end{lem}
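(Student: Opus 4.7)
The plan is to follow exactly the template of the proof of Lemma \ref{lem:EMD}: first establish the per-call cost $d(N,m)$ of computing the MMD estimator on two samples of size $N$ with features in $\mb{R}^m$, then plug this into Lemma \ref{lem:BD_runtime} for the batched side and multiply by $B$ for the (non-batched) permutation-test side.

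First I would bound the cost of a single MMD computation via the U-statistic \eqref{eq:MMD_est}. Each of the three double sums contains $O(N^2)$ terms, and each term is a single RBF kernel evaluation $k(\bs{x},\bs{y}) = \exp(-\|\bs{x}-\bs{y}\|_2^2/(2\sigma^2))$. The dominant cost of one such evaluation is the squared Euclidean norm in $\mb{R}^m$, which takes $O(m)$ operations; the exponential is $O(1)$. Summing, the total runtime per MMD call is $d(N,m) = O(mN^2)$.

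Next I would assemble the two regimes. For permutation testing without batching, we have $N = nk$, so one MMD call costs $O(m n^2 k^2)$, and $B$ permutations yield $O(B m n^2 k^2)$. For the batched algorithm, Lemma \ref{lem:BD_runtime} gives the overall cost as $O(n \cdot d(k,m)) = O(n \cdot m k^2) = O(n m k^2)$, which is the claimed bound.

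There is no real obstacle here; the argument is a direct substitution into Lemma \ref{lem:BD_runtime}. The only subtlety worth flagging in the write-up is that the quoted $O(mN^2)$ cost assumes a fixed RBF kernel whose per-evaluation cost is linear in $m$, so the conclusion is specific to this (or any similarly cheap) kernel choice rather than to arbitrary kernels.
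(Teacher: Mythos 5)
Your proposal is correct and follows essentially the same route as the paper: establish $d(N,m)=O(mN^2)$ from the U-statistic \eqref{eq:MMD_est}, multiply by $B$ with $N=nk$ for the permutation test, and substitute $d(k,m)=O(mk^2)$ into Lemma \ref{lem:BD_runtime} for the batched side. You actually spell out the $O(N^2)$ kernel evaluations at $O(m)$ each, which the paper leaves as ``observe \eqref{eq:MMD_est}.''
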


\begin{proof}
    It is known that for 2 samples of size $N$, MMD can be computed in $d(N,m)=O(mN^2)$ time -- this can be derived by observing \eqref{eq:MMD_est}. 
    \begin{itemize}
        \item Without batching, computing the p-value for MMD via permutation testing would take $O(BmN^2)=O(Bmn^2k^2)$, where we set $N=nk$.
        \item For the BD algorithm, setting $d(k,m)=O(mk^2)$ in Lemma \ref{lem:BD_runtime} gives us the runtime complexity.
    \end{itemize}
\end{proof}

\textbf{Kullback-Leibler divergence.} The KL divergence between two distributions $P$ and $Q$ is defined as
\begin{align}
    D_{KL}(P\|Q)=\E_{X\sim P}\Big[-\log\frac{P(X)}{Q(X)}\Big].
\end{align}
KL divergence can be computed by using $k$-nearest-neighbour density estimation as an intermediate step \cite{Perez2008}.

\begin{lem} \label{lem:KL}
    Using the batched distance algorithm in Algorithm \ref{alg:BD}, the runtime complexity of using the maximum mean discrepancy (MMD) drops from $O(Bmn^2k^2)$ to $O(mnk^2)$, where $B$ is the number of iterations in permutation testing.
\end{lem}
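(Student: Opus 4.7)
The plan is to establish the per-distance runtime for MMD and then invoke Lemma \ref{lem:BD_runtime}, following the same template as the proof of Lemma \ref{lem:MMD}. Taking the statement at face value (noting only that this lemma is placed in the KL paragraph, so there is likely a typo, but the claim as written is about MMD), I would first show that for two samples of size $N$ in dimension $m$, the MMD U-statistic estimator in \eqref{eq:MMD_est} can be computed in $d(N,m) = O(mN^2)$ time. This follows because each of the three double sums in \eqref{eq:MMD_est} contains $O(N^2)$ RBF kernel evaluations, and each evaluation $k(\boldsymbol{x},\boldsymbol{y}) = \exp(-\|\boldsymbol{x}-\boldsymbol{y}\|_2^2/(2\sigma^2))$ requires $O(m)$ operations to form the squared Euclidean norm of an $m$-dimensional vector, followed by an $O(1)$ exponential.

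Next I would handle the unbatched baseline. Without batching, one computes a single MMD on samples of size $N = nk$, then generates $B$ permutation replicates at the same cost to obtain a p-value. The total runtime is therefore $O\bigl(B \cdot d(nk,m)\bigr) = O\bigl(B \cdot m(nk)^2\bigr) = O(Bmn^2k^2)$, matching the stated baseline.

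For the batched side, I would plug $d(k,m) = O(mk^2)$ directly into Lemma \ref{lem:BD_runtime}, giving a runtime of $O\bigl(n \cdot d(k,m)\bigr) = O(n \cdot mk^2) = O(mnk^2)$, which is exactly the improved bound. The main obstacle is essentially nonexistent: the argument is a direct instantiation of Lemma \ref{lem:BD_runtime} once $d(N,m) = O(mN^2)$ is in hand, and the only care required is the kernel-evaluation count in \eqref{eq:MMD_est}, which determines whether the $m$-dependence enters as $O(m)$ (as here, for the RBF kernel on $\mathbb{R}^m$) rather than something larger. I would conclude by observing that the analysis is structurally identical to that of Lemma \ref{lem:MMD}, the only difference being bookkeeping.
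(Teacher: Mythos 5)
Your proposal is structurally identical to the paper's proof template --- establish $d(N,m)=O(mN^2)$ for the distance in question, then read off the unbatched cost $O(B\cdot d(nk,m))$ and the batched cost $O(n\cdot d(k,m))$ from Lemma \ref{lem:BD_runtime} --- and your arithmetic is correct. The one substantive divergence is which distance you analyze. You correctly spotted that the lemma statement's mention of MMD is a copy-paste artifact, but you then chose to prove the statement as literally written, re-deriving the $O(mN^2)$ cost of the MMD U-statistic (which is already the content of Lemma \ref{lem:MMD}). The paper's own proof of this lemma instead establishes $d(N,m)=O(mN^2)$ for the \emph{Kullback--Leibler divergence}, computed via the $k$-nearest-neighbour density-estimation method of P\'erez-Cruz cited in the KL paragraph; that is the ingredient the lemma is actually meant to contribute, given its label, placement, and its use later in Table \ref{tab:parameter_sizes} (the KL-BD and KL-PT rows). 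Because both estimators happen to cost $O(mN^2)$, your final bounds coincide with the paper's, so nothing in your argument is wrong --- but if the lemma is read as intended (about KL), your proof does not supply the needed fact about the KL estimator's runtime, only the already-established fact about MMD. The fix is a one-line substitution: cite the $O(mN^2)$ cost of the $k$-NN-based KL estimator in place of your kernel-evaluation count, and the rest of your argument goes through unchanged.
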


\begin{proof}
    It is known that for 2 samples of size $N$, KL divergence can be computed in $d(N,m)=O(mN^2)$ time \cite{Perez2008}. 
    \begin{itemize}
        \item Without batching, computing the p-value for MMD via permutation testing would take $O(BmN^2)=O(Bmn^2k^2)$, where we set $N=nk$.
        \item For the BD algorithm, we have $d(k,m)=O(mk^2)$ in Lemma \ref{lem:BD_runtime} gives us the runtime complexity.
    \end{itemize}
\end{proof}

\section{Experiments for Problem 1}

The code for all the experiments in this section is presented in \cite{Tan2025}.

\subsection{Comparing batch number and size} 

We study how varying $k/n$ affects the performance of the batched distance algorithm. This study this via numerical simulations instead of analyzing the statistical power analytically because the analytical approach is not feasible -- namely it is difficult to characterize the formula for the variance of estimated distances (and this variance is required for a formula-type analysis of the power). 

We fix the total size $nk=5000$ and vary the ratio
\begin{align*}
    \frac{k}{n}
    =\Big\{\frac{5}{1000},\frac{25}{200},\frac{50}{100},\frac{100}{50},\frac{200}{25},\frac{250}{20},\frac{500}{10}\Big\}.
\end{align*}
For each ratio, we run 100 simulations to calculate the FPR and FNR. We set $\alpha=0.05$, $m=100$, and sample the training set and reference set i.i.d.~from $\normal(\boldsymbol{0}_m,\boldsymbol{I}_m)$. We have chosen drifts that are difficult for the algorithm to detect, so that the changes will be more pronounced. We look at a few situations:
\begin{enumerate}
    \item \textbf{No drift.} The detection window is sampled i.i.d.~from $\normal(\boldsymbol{0}_m,\boldsymbol{I}_m)$. Results are shown in Figure \ref{fig:size_ratio}(a) -- there is no clear relationship between FPR and $k/n$.
    \item \textbf{Mean drift in all feature dimensions.} The detection window is sampled i.i.d.~from $\normal(0.03\cdot\boldsymbol{1}_m,\boldsymbol{I}_m)$. Results are shown in Figure \ref{fig:size_ratio}(b) -- for MMD, FNR decreases and $k/n$ increases. There is no clear relationship between FNR and $k/n$ for EMD and KL.
    \item \textbf{Variance drift in all feature dimensions.} The detection window is sampled i.i.d.~from $\normal(\boldsymbol{1}_m,1.01\cdot\boldsymbol{I}_m)$. Results are shown in Figure \ref{fig:size_ratio}(c) -- for both EMD and KL, as $k/n$ is increased, FNR first drops and then increases. There is no clear relationship between FNR and $k/n$ for MMD.
    \item \textbf{Covariance drift in all feature dimensions.} The detection window is sampled i.i.d.~from $\normal(\boldsymbol{1}_m,\boldsymbol{\Sigma}_m)$. The diagonal entries of $\boldsymbol{\Sigma}_m$ are all ones, and the off-diagonals are all $0.07$. Results are shown in Figure \ref{fig:size_ratio}(d) -- for MMD, FNR decreases and $k/n$ increases. There is no clear relationship between FNR and $k/n$ for EMD and KL.
\end{enumerate}
Overall, MMD works better with a higher $k/n$ whereas EMD and KL work better with a lower $k/n$ (around $0.5$ to $2$). We also point out that increasing $k/n$ makes the BD algorithm run slower, and therefore recommend $k/n$ to be around $1$ to $2$.

\begin{figure}[t]
    \centering
    \includegraphics[width=1.0\columnwidth]{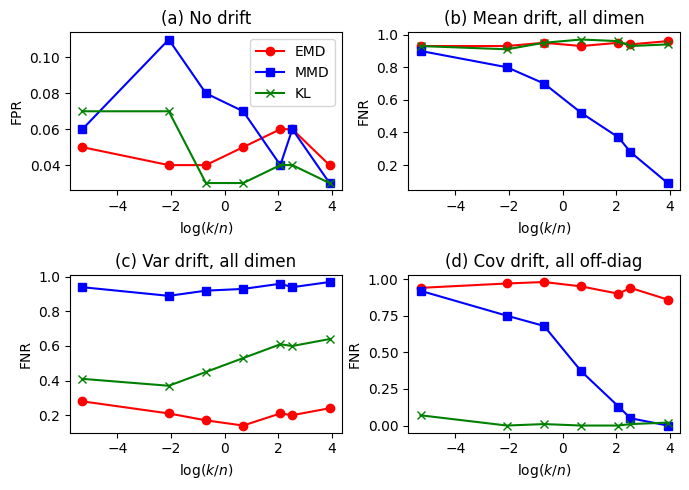}
    \caption{Plots of how the ratio $k/n$ affects the BD algorithm.}
    \label{fig:size_ratio}
\end{figure}

\subsection{Comparing with other approaches} \label{sec:comparing_approaches}

We consider EMD, MMD, and KL divergence, and compare their performances on synthetic datasets with other approaches, namely Kolmogorov–Smirnov test with Bonferroni correction (KS-BC), and permutation testing. Note that EMD, MMD, and KL divergence are estimated using the algorithms that we referenced in Lemma \ref{lem:EMD}, Lemma \ref{lem:MMD}, and Lemma \ref{lem:KL}. We introduce other approaches that allow us to control the FPR:
\begin{enumerate}
    \item \textbf{Kolmogorov–Smirnov test with Bonferroni correction \cite[Chapter 13.3]{James2013}.} This approach uses multiple univariate testing, with a test for each dimension. For each dimension $i$, we test, using the Kolmogorov–Smirnov test, if the dataset from the $i$th dimension of $\mc{S}_T\cup\mc{S}_R$ is equivalent to the dataset from the $i$th dimension of $\mc{S}_D$. If the p-value at any dimension is less than the significant level $\alpha$, then the algorithms detects a drift. We have chosen Bonferroni correction due to its simplicity and conservative nature -- there are other correction methods \cite[Chapter 13]{James2013} that an analyst could consider.
    \item \textbf{Permutation testing \cite[Chapter 13.5]{James2013}.} This approach uses multivariate testing on all dimensions. This approach computes the chosen distance (e.g., EMD, MMD, KL) between $\mc{S}_T\cup\mc{S}_R$ and $\mc{S}_D$. The p-value is then obtained by carrying out a permutation test with $B=100$ permutations -- see Algorithm \ref{alg:perm_test}.
\end{enumerate}

\begin{algorithm}
    \caption{Permutation testing} \label{alg:perm_test}
    \begin{algorithmic}
        \STATE \textbf{input:} training set $\mathcal{S}_T$, reference set $\mathcal{S}_R$, detection set $\mathcal{S}_D$, significant level $\alpha$
        \STATE Compute a distance metric $d=\text{dist}(\mc{S}_T\cup\mc{S}_R,\mc{S}_D)$.
        \FOR{iteration $b\in\{1,\dots,B\}$}
            \STATE Permuate $\mc{S}_T\cup\mc{S}_R\cup\mc{S}_D$ at random. Call the first $2nk$ permuted observations $\mc{S}_T^*\cup\mc{S}_R^*$ and call the remaining $nk$ observations $\mc{S}_D^*$.
            \STATE Compute $d_b^*=\text{dist}(\mc{S}_T^*\cup\mc{S}_R^*,\mc{S}_D^*)$.
        \ENDFOR
        \STATE The p-value is given by $\frac{1}{B}\sum_{b=1}^B\mathds{1}\{|d_b^*|\geq|d|\}$.
        \STATE Drift is detected if the p-value is below $\alpha$.
    \end{algorithmic}
\end{algorithm}

\textbf{Runtime of KS-BC.} For completeness, we provide the runtime complexity of Kolmogorov–Smirnov test with Bonferroni correction (KS + BC), which is $O(nk\log(nk))$. To see why, note that computing the KS test statistic for data of size $O(nk)$ requires sorting all $O(nk)$ observations and therefore takes $O(nk\log(nk))$ time \cite{Gonzalez1977}. Since we have $m$ dimensions, the total runtime is $O(mnk\log(nk))$.

\textbf{Experiment details.} We study the following approaches: (i) batch distance algorithm with EMD, MMD, and KL (denoted as EMD-BD, MMD-BD, and KL-BD for short), (ii) permuation testing with EMD, MMD, and KL (denoted as EMD-PT, MMD-PT, and KL-PT for short), and (iii) KS-BC. Recall that we our focus is the big data regime, where we have limited time (i.e., computational resource) but large volumes of data. Therefore, in our experiments, we constrain the total number of iteration for each approach to around $10^8$, and restrict the number of feature dimensions to $m=100$ and the number of permutations to $B=100$. The choices of the remaining parameters are shown in Table \ref{tab:parameter_sizes}.

\begin{table}[h]
    \caption{Parameter choices for the different approaches such that the total number of iterations is around $10^8$.}
    \begin{center}
    \begin{tabular}{|c|c|c|}
    \hline
    \textbf{Approach} & \textbf{Complexity} & \textbf{Parameters} \\
    \hline
    EMD-BD & $O(n(k^3+mk^2))$ & $n=50,k=100$ \\
    EMD-PT & $O(B(n^3k^3+mn^2k^2))$ & $nk=76$ \\
    MMD-BD & $O(mnk^2)$ & $n=100,k=100$ \\
    MMD-PT & $O(Bmn^2k^2)$ & $nk=100$ \\
    KL-BD & $O(mnk^2)$ & $n=100,k=100$ \\
    KL-PT & $O(Bmn^2k^2)$ & $nk=100$ \\
    KS-BC & $O(mnk\log(nk))$ & $nk=87,900$ \\
    \hline
    \multicolumn{3}{l}{}
    \end{tabular}
    \label{tab:parameter_sizes}
    \end{center}
\end{table}

We run $100$ simulations to calculate the false positive rate (FPR) and the false negative rate (FNR) for every case considered. In each simulation, we use $\alpha=0.05$. Furthermore, the training set and reference set are sampled i.i.d.~from $\normal(\boldsymbol{0}_m,\boldsymbol{I}_m)$. We look at a few scenarios:
\begin{enumerate}
    \item \textbf{No drift.} The detection window is sampled i.i.d.~from $\normal(\boldsymbol{0}_m,\boldsymbol{I}_m)$. The FNR will always be 0, therefore we will only compute the FPR.
    \item \textbf{Mean drift in all feature dimensions.} The detection window is sampled i.i.d.~from $\normal(\zeta\cdot\boldsymbol{1}_m,\boldsymbol{I}_m)$. We vary $\zeta$ from the set $\{0.01,0.02,0.03,0.04\}$. The FPR will always be 0, therefore we will only compute the FNR.
    \item \textbf{Variance drift in all feature dimensions.} The detection window is sampled i.i.d.~from $\normal(\boldsymbol{1}_m,\zeta\cdot\boldsymbol{I}_m)$. We vary $\zeta$ from the set $\{1.005,1.01,1.05,1.10\}$. The FPR will always be 0, therefore we will only compute the FNR.
    \item \textbf{Covariance drift in all feature dimensions.} The detection window is sampled i.i.d.~from $\normal(\boldsymbol{1}_m,\boldsymbol{\Sigma}_m)$. The diagonal entries of $\boldsymbol{\Sigma}_m$ are all ones, and the off-diagonals are all $\zeta$. We vary $\zeta$ from the set $\{0.05, 0.06, 0.07, 0.08\}$. The FPR will always be 0, therefore we will only compute the FNR.
\end{enumerate}

\begin{table*}[h]
    \caption{Experiment results for various methods for drift detection without labels.}
    \begin{center}
    \begin{tabular}{|c|c|cccc|cccc|cccc|}
    \hline
    \textbf{Approach} & \textbf{No drift (FPR)} 
        & \multicolumn{4}{c|}{\textbf{Mean drift with varying $\zeta$ (FNR)}}
        & \multicolumn{4}{c|}{\textbf{Var drift with varying $\zeta$ (FNR)}}
        & \multicolumn{4}{c|}{\textbf{Cov drift with varying $\zeta$ (FNR)}} \\
     &  & $0.01$ & $0.02$ & $0.03$ & $0.04$
        & $1.005$ & $1.01$ & $1.05$ & $1.10$
        & $0.05$ & $0.06$ & $0.07$ & $0.08$ \\
    \hline
    EMD-BD & 0.04 & 0.96 & 0.96 & 0.91 & 0.89 & 0.7 & 0.16 & 0 & 0 & 0.93 & 0.95 & 0.9 & 0.91 \\
    EMD-PT & 0.04 & 0.96 & 0.96 & 0.93 & 0.95 & 0.95 & 0.93 & 0.4 & 0.09 & 0.76 & 0.78 & 0.61 & 0.53 \\
    MMD-BD & 0.03 & 0.96 & 0.85 & 0.25 & 0.01 & 0.97 & 0.96 & 0.83 & 0.08 & 0.63 & 0.4 & 0.03 & 0.03 \\
    MMD-PT & 0.04 & 0.94 & 0.93 & 0.96 & 0.84 & 0.95 & 0.95 & 0.93 & 0.89 & 0.92 & 0.92 & 0.86 & 0.89 \\
    KL-BD  & 0.06 & 0.94 & 0.95 & 0.98 & 0.91 & 0.69 & 0.16 & 0 & 0 & 0.05 & 0 & 0 & 0 \\
    KL-PT  & 0.04 & 0.92 & 0.93 & 0.95 & 0.93 & 0.93 & 0.9 & 0.5 & 0.04 & 0.97 & 0.99 & 0.99 & 1 \\
    KS-BC  & 0.05 & 0 & 0 & 0 & 0 & 0.98 & 0.95 & 0 & 0 & 0.91 & 0.93 & 0.98 & 0.99 \\
    \hline
    \end{tabular}
    \label{tab:experiment_drift_detect_wo_labels}
    \end{center}
\end{table*}

The experiment results are displayed in Table \ref{tab:experiment_drift_detect_wo_labels}. To supplement the numerical results, we also provide visual plots for the results of mean drift in all feature dimensions (scenario 2) in Figure \ref{fig:non_classifier_comparison}(a), the results of variance drift in all feature dimensions (scenario 3) in Figure \ref{fig:non_classifier_comparison}(b), and the results of Variance drift in all feature dimensions (scenario 4) in Figure \ref{fig:non_classifier_comparison}(c).
 We make the following observations:
\begin{enumerate}
    \item \textbf{No drift.} From Table \ref{tab:experiment_drift_detect_wo_labels}, we above that the FPRs are around $0.05$, indicating that it is well controlled. 
    \item \textbf{Mean drift in all feature dimensions.} From Table \ref{tab:experiment_drift_detect_wo_labels} and Figure \ref{fig:non_classifier_comparison}(a), we observe that KS-BC and MMD-BD both performed well, with MMD-BD significantly outperforming MMD-PT. The other approaches are less powerful, performing similarly.
    \item \textbf{Variance drift in all feature dimensions.} From Table \ref{tab:experiment_drift_detect_wo_labels} and Figure \ref{fig:non_classifier_comparison}(b), we observe that EMD-BD and KL-BD performed the best, with EMD-BD, MMD-BD, and KL-BD outperforming EMD-PT, MMD-PT, and KL-PT.
    \item \textbf{Covariance drift in all feature dimensions.} From Table \ref{tab:experiment_drift_detect_wo_labels} and Figure \ref{fig:non_classifier_comparison}(c), we observe that KL-BD performed the best, with KL-BD and MMD-BD significantly outperforming KL-PT and MMD-PT. In this setting EMD-PT outperformed EMD-BD. Also, as expected, KS-BC is unable to detect drifts between features.
\end{enumerate}

\textbf{Overall observation.} Each distance performed well for a different type of drift, and this implies the following recommendations: If mean drift is suspected, then using MMD-BD or KS-BC might be a viable option. If variance drift is suspected, then using EMD-BD or KL-BD might be a viable option. If covariance drift is suspected, then using KL-BD might be a viable option. Overall, whenever a distance is effective at detecting a certain drift (i.e., mean, variance, or covariance), then using that particular distance with the batched distance algorithm outperforms permutation testing, validating the superiority of the batched distance algorithm.

\textbf{Why batching helps.} When we have a runtime budget, batching leads to better statistical power as compared to permutation testing. This is because in order to meet the runtime budget, permutation testing has to reduce the size of its input datasets. The BD algorithm, being more efficient, can use a significantly larger input dataset to meet the same runtime budget. The `gain' in dataset size for the BD algorithm leads to a better statistical power.

Since each distance has its own strengths, one might wonder whether we could do better by combining the outputs of different approaches. This spurs us in the direction of combining the outputs of several approaches to conduct drift detection, which leads us into the next part of the paper.
 
\begin{figure}[t]
    \centering
    \includegraphics[width=1.0\columnwidth]{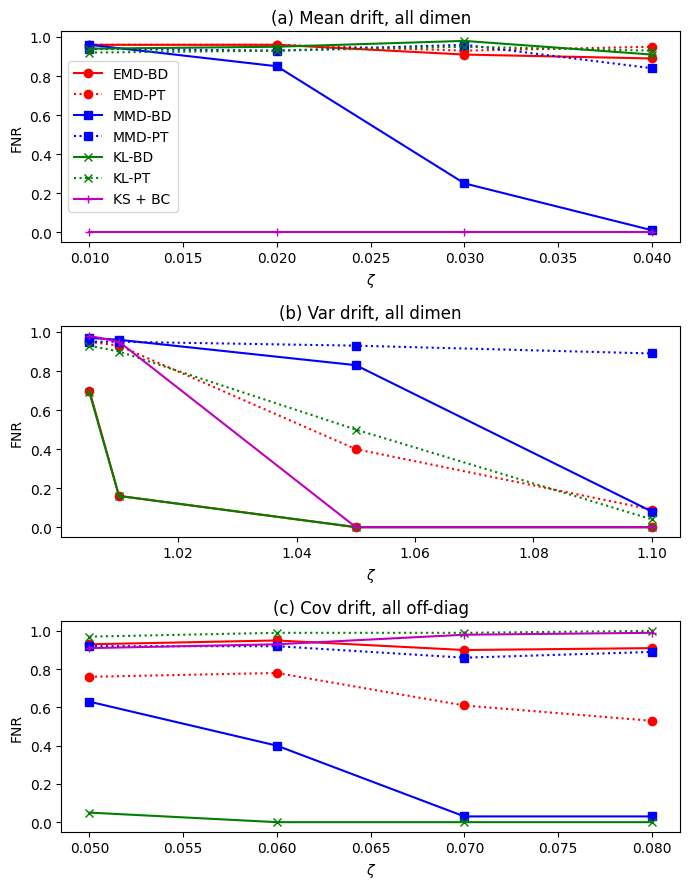}
    \caption{Plots comparing the different approaches.}
    \label{fig:non_classifier_comparison}
\end{figure}

\section{Algorithms for Problem 2}

\begin{algorithm}
    \caption{Classifier algorithm} \label{alg:classifier}
    \begin{algorithmic}
        \STATE \textbf{input:} training set $\mathcal{S}_T$, reference set $\mathcal{S}_R$, detection set $\mathcal{S}_D$, prior detections $\big\{\mc{S}_T^{(i)},\mc{S}_R^{(i)},\mc{S}_D^{(i)},z^{(i)}\big\}_{i=1}^M$ where $i\in\{1,\dots,M\}$, batch number $n$ and batch size $k$ for BD algorithm, choice of classifier, and classifier threshold $\xi$.
        \STATE Initialize the classifier training set $\mc{C}$ and output type (p-values or test statistics).
        \FOR{$i\in\{1,\dots,M\}$}
            \STATE Compute outputs $p_1^{(i)},p_2^{(i)},p_3^{(i)}$ using BD algorithm with EMD (for $p_1^{(i)}$), MMD (for $p_2^{(i)}$), KL (for $p_3^{(i)}$) on $\{\mc{S}_T^{(i)},\mc{S}_R^{(i)},\mc{S}_D^{(i)}\}$.
            \STATE Compute output $p_4^{(i)}$ using KS + BC on $\{\mc{S}_T^{(i)},\mc{S}_R^{(i)},\mc{S}_D^{(i)}\}$, where we output the minimum p-values amongst all the dimensions.
            \STATE Add $\{p_1^{(i)},p_2^{(i)},p_3^{(i)},p_4^{(i)},z^{(i)}\}$ to $\mc{C}$.
        \ENDFOR
        \STATE Standardize $\mc{C}$ to give $\widetilde{\mc{C}}$.
        \STATE Train the chosen classifier on $\widetilde{\mc{C}}$.
        \STATE Compute outputs (p-values or test statistics) $p_1,p_2,p_3$ using BD algorithm with EMD (for $p_1$), MMD (for $p_2$), KL (for $p_3$) on $\{\mc{S}_T,\mc{S}_R,\mc{S}_D\}$.
        \STATE Compute output (p-value or test statistics) $p_4$ using KS + BC on $\{\mc{S}_T,\mc{S}_R,\mc{S}_D\}$, where we choose the minimum output amongst all the dimensions.
        \STATE Standardize $\{p_1,p_2,p_3,p_4\}$ and feed it into the trained classifier to obtain the classifier probability $p_c$.
        \STATE Drift is detected if $p_c\leq\xi$.
    \end{algorithmic}
\end{algorithm}

Recall that under this setting, we have access to prior detections (see Section \ref{sec:setup}). We do not use permutation testing to generate the p-values as they are impractical in the big data regime, especially in this setting (problem 2), since we are dealing with significantly more datasets (i.e., training, reference, and detection sets) compared to the previous setting (problem 1). We first consider two ways to threshold the combined p-values:
\begin{itemize}
    \item \textbf{Taking average.} The most obvious and naive approach is to take the average of the p-values of the different approaches. We then threshold the average p-value to make the decision, i.e., declare drift if the average p-value is below the significant level $\alpha$, and declare no drift otherwise.
    \item \textbf{Perceptron learning} A slightly more sophisticated way is to use perceptron learning to learn the linear combination of the p-values before thresholding. Specifically, the perceptron algorithm is used to learn the weights $\boldsymbol{w}\in\mb{R}^4$ from the p-values $\boldsymbol{p}\in\mb{R}^4$ (we have 4 p-values) such that
    \begin{align*}
        z=\mathds{1}\{\boldsymbol{w}^\top\boldsymbol{p}+\alpha>0\}
        =\begin{cases}
            1 &\text{if $-\boldsymbol{w}^\top\boldsymbol{p}<\alpha$} \\
            0 &\text{if $-\boldsymbol{w}^\top\boldsymbol{p}\geq\alpha$}.
        \end{cases}
    \end{align*}
    Note that $\alpha$ here is fixed, and would not be learnt by the algorithm.
\end{itemize}
A more sophisticated way is to consider the general classifier algorithm presented in Algorithm \ref{alg:classifier}, where we can choose whether to use p-values or test statistics as features. We do not use permutation testing to generate the outputs as they are inpractical in the big data regime. The main question is to determine if we can do better given that we have knowledge about prior detections and are allowed to combine the outputs of different approaches. We use the following choice of classifiers:
\begin{itemize}
    \item \textbf{Logistic regression (LR).} The algorithm predicts the probability of a binary outcome using a linear combination of input features passed through a sigmoid function. The probability of a class can be predicted by calculating the sigmoid of a weighted sum of the input features.
    \item \textbf{K-nearest neighbour (kNN).} Classifies or predicts data points based on the majority class or average value of its 'k' nearest neighbors in the feature space. The probability of a class can be predicted by the proportion of neighbors belonging to that class.
    \item \textbf{Multi-layer perceptron (MLP).} A feedforward neural network with multiple layers that learns complex patterns by adjusting the weights of connections between neurons. The probability of a class can be predicted by applying the softmax function on the output layer's activations.
\end{itemize}
The FPR can be controlled by adjusting the probability threshold used in each of the algorithms presented above.

\section{Experiments for Problem 2}


The code for all the experiments in this section is presented in \cite{Tan2025}. We first generate our training set and then generate our testing set before testing our classifier on the testing set. 

\textbf{Training data generation.} We set $n=50$, $k=100$, $m=100$, and the training set and reference set are sampled i.i.d.~from $\normal(\boldsymbol{0}_m,\boldsymbol{I}_m)$. We look at the following scenarios (which are exactly the same as those used in Section \ref{sec:comparing_approaches}):
\begin{enumerate}
    \item \textbf{No drift.} The detection set is sampled i.i.d.~from $\normal(\boldsymbol{0}_m,\boldsymbol{I}_m)$.
    \item \textbf{Mean drift in all feature dimensions.} The detection window is sampled i.i.d.~from $\normal(\zeta\cdot\boldsymbol{1}_m,\boldsymbol{I}_m)$. We vary $\zeta$ from the set $\{0.01,0.02,0.03,0.04\}$.
    \item \textbf{Variance drift in all feature dimensions.} The detection window is sampled i.i.d.~from $\normal(\boldsymbol{1}_m,\zeta\cdot\boldsymbol{I}_m)$. We vary $\zeta$ from the set $\{1.005,1.01,1.05,1.10\}$.
    \item \textbf{Covariance drift in all feature dimensions.} The detection window is sampled i.i.d.~from $\normal(\boldsymbol{1}_m,\boldsymbol{\Sigma}_m)$. The diagonal entries of $\boldsymbol{\Sigma}_m$ are all ones, and the off-diagonals are all $\zeta$. We vary $\zeta$ from the set $\{0.05, 0.06, 0.07, 0.08\}$.
\end{enumerate}

\textbf{Training data generation.} We sample 50 sets of $\{\mc{S}_T^{(i)},\mc{S}_R^{(i)},\mc{S}_D^{(i)}\}$ from scenario 1 with $z^{(i)}=0$, 40 sets of $\{\mc{S}_T^{(i)},\mc{S}_R^{(i)},\mc{S}_D^{(i)}\}$ from scenario 2 with $z^{(i)}=1$ with 10 sets for each $\zeta$, 40 sets of $\{\mc{S}_T^{(i)},\mc{S}_R^{(i)},\mc{S}_D^{(i)}\}$ from scenario 3 with $z^{(i)}=1$ with 10 sets for each $\zeta$, 40 sets of $\{\mc{S}_T^{(i)},\mc{S}_R^{(i)},\mc{S}_D^{(i)}\}$ from scenario 4 with $z^{(i)}=1$ with 10 sets for each $\zeta$.

\textbf{Testing data generation.} We run 100 simulations for each scenario to calculate the FPR and the FNR for every case considered -- the cases are exactly the same as those presented in the training data generation (see above). In each simulation, we use $n=50$, $k=100$, and $\alpha=0.05$.

\textbf{Classifier details.} We use the following parameters for each of our classifiers.
\begin{itemize}
    \item \textbf{Logistic regression (LR).} We use threshold $\xi=0.8$ in Algorithm \ref{alg:classifier}. 
    \item \textbf{K-nearest neighbour (kNN).} We use threshold $\xi=0.85$ in Algorithm \ref{alg:classifier}, and set the number of nearest neighbour to $10$.
    \item \textbf{Multi-layer perceptron (MLP).} We use threshold $\xi=0.8$ in Algorithm \ref{alg:classifier}, 2 hidden layers of sizes $4$ and $2$, and the ReLU activation function.
\end{itemize}

The thresolds $\xi$ are chosen to control the FPR to be around $0.05$. We use the following shorthands in our experiment results:
\begin{itemize}
    \item AVG for the average p-value approach, and PL for perceptron learning. 
    \item LR-p, kNN-p, MLP-p for logistic regression, k-nearest neighbour, and multi-layer perceptron, respectively, in Algorithm \ref{alg:classifier} using p-values as features.
    \item LR-s, kNN-s, MLP-s for logistic regression, k-nearest neighbour, and multi-layer perceptron, respectively, in Algorithm \ref{alg:classifier} using test statistics as features. 
\end{itemize}
\textbf{FPR and FNR.} The experimental results for FPR and FNR are shown in Table \ref{tab:experiment_drift_detect_w_prior_labels}. To supplement the numerical results, we provide visual plots for the classifier approach in Figure \ref{fig:classifier_comparison}. We make the following observations.
\begin{enumerate}
    \item \textbf{No drift.} As expected, the FPR is kept low around 0.05, since we have adjusted the classifier's predicted probability threshold so that the FPR is comparable to those of previous experiments in problem 1.
    \item \textbf{Mean drift in all feature dimensions.} As expected, the more sophisticated approaches LR-s and MLP-s, along with KS-BC, performed the best. This shows that KS-BC is very suitable for mean drift detection and we cannot do significantly better with prior detection information. 
    \item \textbf{Variance drift in all feature dimensions.} Overall, the most sophisticated approach, MLP-p, performed the best.
    \item \textbf{Covariance drift, all off-diagonals.} Overall, the most sophisticated approach, MLP-p, performed the best.
\end{enumerate}

\begin{table*}[h]
    \caption{Experiment results for various methods for drift detection with prior detections.}
    \begin{center}
    \begin{tabular}{|c|c|cccc|cccc|cccc|}
    \hline
    \textbf{Approach} & \textbf{No drift (FPR)} 
        & \multicolumn{4}{c|}{\textbf{Mean drift with varying $\zeta$ (FNR)}}
        & \multicolumn{4}{c|}{\textbf{Var drift with varying $\zeta$ (FNR)}}
        & \multicolumn{4}{c|}{\textbf{Cov drift with varying $\zeta$ (FNR)}} \\
     &  & $0.01$ & $0.02$ & $0.03$ & $0.04$
        & $1.005$ & $1.01$ & $1.05$ & $1.10$
        & $0.05$ & $0.06$ & $0.07$ & $0.08$ \\
    \hline
    AVG    & 0    & 1 & 1 & 1 & 0.75 & 1 & 1 & 1 & 0 & 0.75 & 1 & 1 & 0.5 \\
    PL     & 0.02 & 0.75 & 1 & 1 & 0.75 & 0.75 & 0.75 & 1 & 0 & 0.5 & 0 & 0.25 & 0 \\
    LR-p   & 0.09 & 0.82 & 0.62 & 0.41 & 0.21 & 0.8 & 0.62 & 0.29 & 0.04 & 0.36 & 0.26 & 0.05 & 0.11 \\
    kNN-p  & 0.06 & 0.89 & 0.73 & 0.61 & 0.48 & 0.88 & 0.64 & 0.25 & 0.04 & 0.47 & 0.31 & 0.14 & 0.14 \\
    MLP-p  & 0.09 & 0.76 & 0.57 & 0.46 & 0.35 & 0.68 & 0.34 & 0.06 & 0.02 & 0.24 & 0.12 & 0.01 & 0.02 \\
    LR-s   & 0.05 & 0.87 & 0.45 & 0.1 & 0 & 0.92 & 0.88 & 0.06 & 0 & 0.52 & 0.27 & 0.06 & 0.01 \\
    kNN-s  & 0.07 & 0.91 & 0.65 & 0.2 & 0.02 & 0.94 & 0.91 & 0.01 & 0 & 0.57 & 0.28 & 0.06 & 0.02 \\
    MLP-s  & 0.06 & 0.88 & 0.51 & 0.11 & 0 & 0.95 & 0.97 & 0.33 & 0 & 0.67 & 0.36 & 0.12 & 0.05 \\
    EMD-BD & 0.07 & 0.93 & 0.93 & 0.93 & 0.97 & 0.69 & 0.16 & 0 & 0 & 0.94 & 0.94 & 0.95 & 0.94 \\
    MMD-BD & 0.07 & 0.92 & 0.88 & 0.52 & 0.17 & 0.93 & 0.93 & 0.85 & 0.4 & 0.79 & 0.55 & 0.37 & 0.23 \\
    KL-BD  & 0.04 & 0.96 & 0.98 & 0.97 & 0.93 & 0.82 & 0.46 & 0 & 0 & 0.3 & 0.06 & 0 & 0 \\
    KS-BC  & 0.06 & 0.91 & 0.48 & 0.13 & 0 & 0.94 & 0.98 & 0.88 & 0.53 & 0.95 & 0.95 & 0.95 & 0.95 \\
    \hline
    \end{tabular}
    \label{tab:experiment_drift_detect_w_prior_labels}
    \end{center}
\end{table*}

\begin{figure}[t]
    \centering
    \includegraphics[width=1.0\columnwidth]{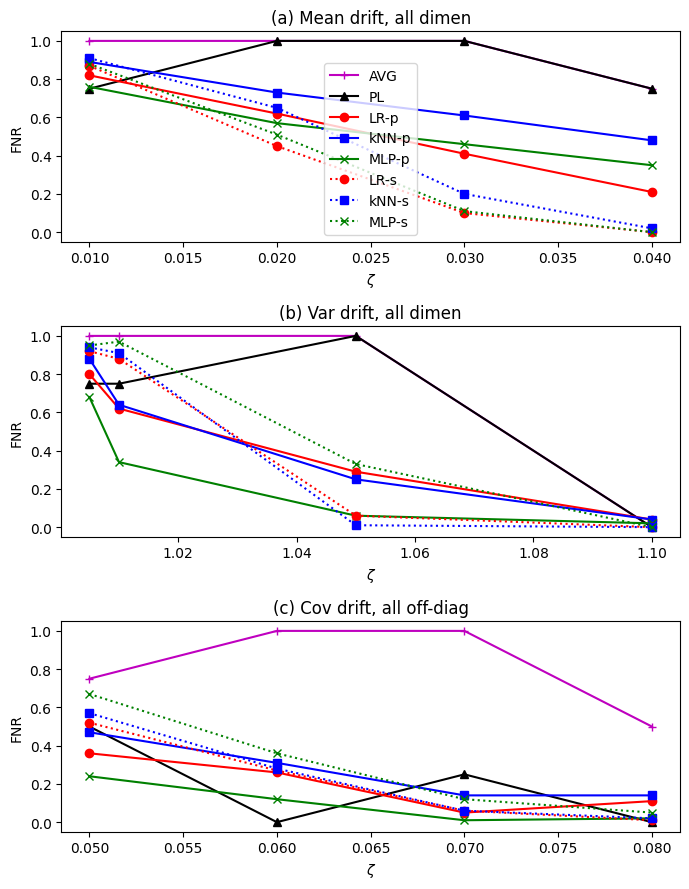}
    \caption{Plots comparing the different approaches that utilize prior detection information.}
    \label{fig:classifier_comparison}
\end{figure}

\textbf{Accuracy.} The accuracy results for the various detection approaches are as follows are presented in Table \ref{tab:classification_accuracy}. We observe that MLP-p is the best when p-values are used as features, and LR-s is the best when test statistics are used as features.

\begin{table}[h!]
    \caption{Accuracy results for the various detection approaches.}
    \begin{center}
    \begin{tabular}{|l|c|}
    \hline
    \textbf{Approach} & \textbf{Accuracy} \\
    \hline
    AVG     & 23.1\% \\
    PL      & 47.9\% \\
    LR-p    & 64.0\% \\
    kNN-p   & 56.6\% \\
    MLP-p   & 71.4\% \\
    LR-s    & 67.8\% \\
    kNN-s   & 64.3\% \\
    MLP-s   & 61.5\% \\
    EMD-BD  & 35.0\% \\
    MMD-BD  & 41.5\% \\
    KL-BD   & 57.5\% \\
    KS-BC   & 33.0\% \\
    \hline
    \end{tabular}
    \label{tab:classification_accuracy}
    \end{center}
\end{table}

As expected, approaches that use prior detection information generally perform better than those that do not use prior detection information. Furthermore, the most sophisticated classifier (MLP) performed the best in terms of accuracy.

\textbf{Remark.} We admit that the classifier-based fusion of p-values and test statistics may be prone to overfitting. Hence, it would be interesting to design better approaches for this semi-supervised setting -- we leave this as a future direction.

\section{Conclusion}

We considered drift detection without labels in the big data regime where there is a huge volume of data, but there is a constraint on the computational resources (i.e., runtime budget). In this regime, we introduce the batched distance algorithm and show experimentally that it outperforms permutation testing in terms of statistical power, while also keeping the false positive rate low. Furthermore, we introduced a new semi-supervised drift detection framework to model the scenario of detecting drift (without labels) given prior detections. Under this setting we showed that our drift detection algorithm can be incorporated effectively into this framework. 





\bibliographystyle{IEEEtran}
\bibliography{IEEEabrv,references}

\end{document}